


\documentclass[anonymous]{article}
\pdfpagewidth=8.5in
\pdfpageheight=11in
\usepackage{ijcai22}
\usepackage{stmaryrd}
\usepackage{times}

\usepackage{soul}
\usepackage{url}
\usepackage[hidelinks]{hyperref}
\usepackage[utf8]{inputenc}
\usepackage[small]{caption}
\usepackage{graphicx}
\usepackage{amsmath}
\usepackage{booktabs}
\usepackage{xcolor}
\usepackage{pgfplots}
\urlstyle{same}
\usepackage[ruled,vlined,linesnumbered]{algorithm2e}
\usepackage{amsthm}
\usepackage{theoremref}
\usepackage{amsfonts}
\usepackage{amssymb}

\newtheorem{theorem}{Theorem}[section]
\newtheorem{proposition}[theorem]{Proposition}

\SetKwComment{Comment}{$\triangleright$\ }{}

\definecolor{ballblue}{rgb}{0.13, 0.67, 0.8}

\newcommand{\rr}{\mathbf{r}}
\newcommand{\pp}{\mathbf{p}}
\newcommand{\nn}{\mathbf{n}}
\newcommand{\xx}{\mathbf{x}}
\newcommand{\ww}{\mathbf{w}}

\newcommand{\cX}{\mathcal{X}}

\newcommand{\cP}{\mathcal{P}}
\newcommand{\cN}{\mathcal{N}}

\newcommand{\cS}{\mathcal{S}}
\newcommand{\cH}{\mathcal{H}}
\newcommand{\method}{FIND-RS}

\newcommand{\Real}{\mathbb{R}}
\newcommand{\Exp}{\mathbb{E}}





\pdfinfo{
/TemplateVersion (IJCAI.2022.0)
}

\title{Bayes Point Rule Set Learning}

\author{
Fabio Aiolli$^1$\and
Luca Bergamin$^1$\and
Tommaso Carraro$^{1,2}$\And
Mirko Polato$^3$\footnote{Contact Author}\\
\affiliations
$^1$Department of Mathematics, University of Padova, Padova, Italy\\
$^2$Fondazione Bruno Kessler, Trento, Italy\\
$^3$Department of Computer Science, University of Turin, Turin, Italy\\
\emails
aiolli@math.unipd.it,
luca.bergamin@math.unipd.it,
tcarraro@fbk.eu,
mirko.polato@unito.it
}

\begin{document}

\maketitle

\begin{abstract}
Interpretability is having an increasingly important role in the design of machine learning algorithms. However, interpretable methods tend to be less accurate than their black-box counterparts.
Among others, DNFs (Disjunctive Normal Forms) are arguably the most interpretable way to express a set of rules. In this paper, we propose an effective bottom-up extension of the popular FIND-S algorithm to learn DNF-type rulesets. The algorithm greedily finds a partition of the positive examples. The produced DNF is a set of conjunctive rules, each corresponding to the most specific rule consistent with a part of positive and all negative examples. We also propose two principled extensions of this method, approximating the Bayes Optimal Classifier by aggregating DNF decision rules. Finally, we provide a methodology to significantly improve the explainability of the learned rules while retaining their generalization capabilities.
An extensive comparison with state-of-the-art symbolic and statistical methods on several benchmark data sets shows that our proposal provides an excellent balance between explainability and accuracy.
\end{abstract}

\section{Introduction}

Despite being out of fashion, one of the most popular learning schemes in machine learning is rule learning. In the early years of machine learning, rule learning has dominated the research scene with work dated back to the 1960s \cite{Michalski1969}.
However, in the last decades, many statistical machine learning methods (e.g., SVM and neural networks) have shown of being superior in terms of predictive accuracy, absorbing most of the research community's efforts.
This ``pursuit of accuracy'' led to a period that we may define the winter of rule learning. 

The drawback of many state-of-the-art models is that they are (usually) complex and lack transparency, which leads to poor comprehensibility. Nowadays, more than ever, there is an increasing awareness of the importance of having the ability to explain the decisions of artificial intelligence systems.
One of the main lines of research that pursue the goal to better explain machine learning models is \textit{interpretable machine learning}, of which rule learning represents one of the go-to approaches. The main challenge in rule learning is to efficiently learn rules that are ``simple enough'' to be useful in practice but at the same time have a state-of-the-art accuracy. The literature offers many algorithms for rule induction~\cite{Prati2005ROCCERAA,ripper,cn2}, but still most of them are far from having performance comparable to the state-of-the-art.

Rule sets~\cite{Furnkranz2010}  are by far the most discussed approaches, thanks to their natural interpretation. In particular, for binary classification problems, the rules can be combined so that the resulting hypothesis resembles a Boolean formula in Disjunctive Normal Form (DNF): the set contains only rules that describe the positive class (concept learning~\cite{mitchellML}). Thus, an instance is classified as positive if and only it satisfies the conditions of at least one rule.

DNF-type rulesets are at the same time intuitive and powerful, i.e., a single DNF can describe any Boolean concept.
About DNF, Leslie Valiant, pioneer of computational learning theory, stated~\cite{Valiant_ijcai_1985}:

\textit{The possible importance of disjunctions of conjunctions as a knowledge representation stems from the observations that [...] humans appear to like using it.}

In this paper, we propose \method\ (Find Rule Set), an effective bottom-up extension of the popular FIND-S algorithm for DNF-ruleset induction. 
This method applies a specific-to-general approach by greedily partitioning the set of positive examples while it learns for each partition the most specific conjunctive rule consistent with the training set. Conjunctive rules are learned using the same idea of  FIND-S~\cite{mitchellML} and the final hypothesis consists of a disjunction of such conjunctive terms (i.e., a DNF). 
Under mild conditions, \method\ guarantees to find a hypothesis that correctly classifies the training set. We also provide an efficient pruning procedure that tries to simplify the final DNF to improve interpretability. 

As typical in the context of rule learning, on more complex classification tasks, the hypothesis of \method{} may perform poorly w.r.t. state-of-the-art classification methods. For this reason, we propose \method-BO, inspired by the Bayes Optimal Classifier~\cite{mitchellML,DBLP:conf/nips/HerbrichG00a}, which builds a classification hypothesis aggregating the decision of different \method's hypotheses on different runs.
Inspired by the work on Bayes Point Machines \cite{DBLP:conf/nips/HerbrichG00a,DBLP:journals/jmlr/HerbrichGC01}, we then define a principled method to approximate the center of mass of the version space. This method, named  \method-BP, has the advantage to produce an interpretable and very accurate set of rules.  
We show that this new hypothesis significantly improves the performance achieving state-of-the-art results. Finally, we provide a heuristic to  drastically simplify  the rule set generated, thus improving the explainability of \method-BP while retaining excellent generalization capabilities.


\section{Related Work}



Early work on rule learning focuses on employing different heuristic methods. Iterative Dichotomiser 3 (ID3, \cite{id3-1}, \cite{id3-2}), and later Classification and Regression Trees (CART, \cite{cart}), both use greedy partitioning based on different impurity measures. Another specialized learning algorithm class is sequential covering, which uses an iterative process to learn a new rule, useful to explain a subset of the training data. Once a rule is discovered, they remove the corresponding examples from the training process. A selection of the most relevant work is AQ \cite{aq}, CN2 \cite{cn2}, and RIPPER \cite{ripper}. The latter, based on IREP \cite{irep}, overcomes the overfitting problem with a more effective pruning phase while retaining computational efficiency. It is a fast method that achieves good performance to this day, hence still considered state-of-the-art \cite{overviewrulelearning}. The main weaknesses of sequential covering algorithms sit on falling short in processing efficiency for ever-growing modern data sets and on the inability to cover the whole search space, OPUS being the most notable exception (\cite{opus}).


A different class of algorithms employs Bayesian methods to evaluate proper rules in a probabilistic framework, more apt to be resilient to noise and uncertainty. Notable examples are Bayesian Rule Lists (BRL) \cite{brl} and the subsequent Scalable Bayesian Rule Lists (SBRL)  \cite{sbrl}. Albeit their foundation is sound, the required computation of frequent itemsets increases training time. Moreover, high memory is a common requirement. Hence, these methods do not scale well.

A common way to improve machine learning methods uses multiple instances of a simpler classifier. Both SLIPPER \cite{slipper} and Random Forest \cite{rf} are algorithms that leverage this idea. The former opts for RIPPER as the base learner, while the latter uses decision trees. Resorting to a voting strategy is a simple but effective way to boost performance while improving generalization. Albeit more effective, ensemble models are not easily interpretable by default, even if made up by interpretable models.  

Rule learning has seen a resurgence in recent years, using neural architectures based on gradient descent. Thanks to the notable endeavors of the deep learning research community, these methods are nowadays competitive to heuristic models \cite{rrl}, contributing to give new life to rule learning and unique perspectives on the nature of concept learning, e.g. DRNC \cite{beck2021empirical}. These promising methods have typical pitfalls of deep learning, such as sensitivity to hyper-parameters, high computing requirements, and convergence issues.

\section{Background}
In this section, we discuss the notions and notation that we will use throughout the paper.
\subsection{Notation}
We consider binary classification problems with training sets $\cS \equiv \{(\xx_i, y_i)\}_{i=1}^n$, where $\xx_i \in \cX$ are categorical feature vectors, with $\cX \equiv \times_{j=1}^m \cX_j$ for some finite (symbolic) attribute/variable domains $\cX_j$, and $y_i \in \{-1,+1\}$. We call $\cP \equiv \{\xx \mid (\xx, y) \in \cS \wedge y=+1 \}$ the set of positive instances, and conversely, $\cN \equiv \{\xx \mid (\xx, y) \in \cS \wedge y=-1 \}$ the set of negative instances.
We denote with $\rr \in \times_{i=1}^m (\cX_i \cup \{?\})$ a rule, and we say $\rr$ covers an instance $\xx \in \cX$ (or $\xx$ satisfies $\rr$), $\rr \succeq \xx$, iff $\forall i \in [m]$, $r_i =\: ?$ or $x_i = r_i$. In logical terms, $\rr \succeq \xx$ means that the conjunction $\bigwedge_{r_i \in \rr | r_i \neq ?} (r_i=x_i)$ is true. It is noteworthy that an instance $\xx$ can be seen as a rule in which every variable is constrained.
A set $D \equiv \{\rr_1, \dots, \rr_k\} $ of (conjunctive) rules covers an instance $\xx \in \cX$, denoted $D \succeq \xx$,  iff $\exists \rr \in D \mid \rr \succeq \xx$. From a logical stand point, a set of conjunctive rules corresponds to a monotone DNF (MDNF).
A DNF is called monotone if the terms in its conjunctions are all \textit{positive}, i.e., non-negated. We say that a DNF formula is consistent with a training set iff it covers the positive training instances and it does not cover any negative training instance. We say that a rule set (or DNF) $D_1$ generalizes a rule set $D_2$, $D_1 \geq D_2$,  iff $\forall \xx \in \cX$, $(D_2 \succeq \xx) \Rightarrow (D_1 \succeq \xx)$. With a slight abuse of notation, an MDNF rule $h$ is often used as a classification function in such a way that given an example $\xx$, $h(\xx) = +1$ if $h \succeq \xx$, $-1$ otherwise. An ordered set of rules is denoted by $\langle \rr_1, \dots, \rr_k \rangle$. Finally, $\llbracket b \rrbracket \in \{0,1\}$ denotes the indicator function which is 1 iff the condition $b$ is true.

\subsection{Hypothesis spaces}

Categorical variables, where no ordinal relationships exist in their domains, can be embedded in a vector by one-hot encoding (OH). Rule learners typically permit attribute-value encodings (AV) as well. Choosing one encoding or another can be crucial for the success of learning. In particular, the number and the nature of the possible rules changes. Let $k_i$ be the number of possible values for the attribute $a_i$, then the number of available rule-terms for the attribute $a_i$ is $N_{\text{AV}}(i) = k_i+1$ and $N_{\text{OH}}(i) = 2^{k_i}-1$ for the AV and OH encodings, respectively. For example, for an attribute with $3$ values \{1,2,3\}, we have $(1), (2), (3), (?)$ for the AV encoding and $(1,0,0), (0,1,0), (0,0,1), (0,?,?), (?,0,?), (?,?,0), (?,?,?)$ for the OH encoding, thus including negations, e.g., $(0,?,?)$ which is equivalent to $not\ (a_i=1)$. 
Hence, the number of different syntactic rules we can construct for an attribute is $N_{\text{AV}} = \prod_{a_i} (k_i + 1)$ and $N_{\text{OH}} = \prod_{a_i} (2^{k_i} - 1)$, respectively. Given $N$, the number of possible rules, we can have up to $N(N-1)\cdots(N-k+1)$ different k-terms MDNFs.

The methods proposed in this paper use the hypothesis space of MDNFs, that is $\cH = \{h | h\ \text{is an}\ \text{MDNF}\}$. Given a training set $\cS$, the \emph{version space} is the set of hypotheses that correctly classify training data, i.e., $\text{V}(\cS) = \{h \in \cH | h(\xx_i)=y_i, \forall (\xx_i,y_i) \in \cS\}$.

\section{\method}

In this section, we present \method\ a novel greedy concept learning method which extends the popular FIND-S algorithm. 

\subsection{\method\ overview}

\method\ considers a hypothesis space consisting of MDNF formulas of arbitrary size. \method\ uses a bottom-up approach, starting from a very specific hypothesis that is greedily generalized to allow novel positive instances to be covered, while being consistent with the negative instances.
\method\ is greedy because each positive example is seen only once\footnote{During the post-processing step a subset of the positive instances  are considered a second time.} and, at each iteration, the (running) hypothesis can only be generalized.

The algorithm updates the running MDNF hypothesis in a  fashion similar to the FIND-S algorithm. Specifically, the algorithm starts with an empty MDNF (i.e., the bottom rule), then, after considering a new positive training instance $\pp \in \cP$, it adapts the current MDNF in order to be satisfied by $\pp$. An update consists in generalizing a single conjunctive rule, while keeping track of the instances that have been used to modify that rule.

Let $D^t = \langle \rr_1, \dots, \rr_k \rangle$ be the hypothesis at iteration $t$ (i.e., after seeing $t$ positive instances). For each $\rr_i$, let $B_i$ (that we called bucket) be the set of instances used to update $\rr_i$ up to the current iteration. 
Given a new positive example $\pp$, the algorithm checks (following the order) if a rule $\rr_i$ can be generalized to cover $\pp$. If so, then the $\rr_i$ is updated (if necessary) and $\pp$ is added to $B_i$. The generalization procedure is carried out in such a way that the constructed (conjunctive) rule is equivalent to the FIND-S hypothesis built on its corresponding set of examples $B_i$. 
Akin FIND-S, \method\ generalizes a rule by removing one or more attribute-value constraints (i.e., setting it equals to $?$) while keeping the overall hypothesis consistent with the negative examples. 

Otherwise, if no $\rr_i$ can be safely generalized to cover $\pp$, then $D^{t+1} = D^{t} \vee \pp$, or in other words the generalization of the current hypothesis is done by adding a new rule that is satisfied by only instances that are equivalent to $\pp$.

It is worth to notice that, by design, \method\ will always find a (MDNF) hypothesis that correctly classifies all the training set iff there are no contradictory examples, that is $\nexists (\xx_1, y_1), (\xx_2, y_2) \in \cS \mid (\xx_1 = \xx_2) \wedge (y_1 \neq y_2)$. In the worst case, \method\ will return a hypothesis of the form $\bigvee_{\pp \in \cP} \pp$ that clearly overfits the training set.

\subsection{MDNF pruning} \label{sec:pruning}
Being \method\ a greedy algorithm, at the end of the training process, the produced hypothesis may contain superfluous rules, i.e., rules that can be removed without losing the consistency with the training set. Thus the idea of the pruning is to discard such redundant rules. To speed up this pruning process, we rely on the following observation.

\begin{proposition}\label{prop1}
At every iteration $t$ of \method\ holds that
$$
    \forall j < i \in [k], \; \nexists \xx \in B_i \mid \rr_j \succeq \xx,
$$
where the current hypothesis is $D^{t} = \langle \rr_1, \dots, \rr_k \rangle$.
\end{proposition}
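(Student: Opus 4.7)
The plan is to prove the statement by induction on the iteration $t$. The base case ($t = 0$, $k = 0$) is vacuous. For the inductive step, suppose the invariant holds after $t$ iterations, and let $\pp$ be the $(t+1)$-th positive example. There are two cases depending on how \method{} processes $\pp$: either $\pp$ is absorbed into an existing bucket $B_i$ (with $\rr_i$ possibly generalized), or a fresh bucket $B_{k+1} = \{\pp\}$ with $\rr_{k+1} = \pp$ is created.

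The ``fresh bucket'' case is immediate. All previous rules and buckets are untouched, so the inductive hypothesis takes care of the existing indices. For the new bucket we must check that $\rr_j \not\succeq \pp$ for every $j \leq k$; but if any such $\rr_j$ covered $\pp$, then the trivial (no-op) generalization of $\rr_j$ would already be consistent with the negatives, and by the algorithm's in-order sweep $\pp$ would have been assigned to $B_j$ rather than spawning a new bucket.

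In the ``absorbed'' case, the same in-order argument yields $\rr_j \not\succeq \pp$ for all $j < i$. For buckets $B_l$ with $l > i$, the only rule that has changed is $\rr_i$; all other $\rr_j$ with $j \neq i$, $j < l$ are unaffected, so the inductive hypothesis still applies to them. The delicate point, which I expect to be the main obstacle, is to show that the newly-generalized $\rr_i^{t+1}$ does not start covering some $\xx \in B_l$ that it did not cover before.

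I would handle this by contradiction, exploiting the monotonicity of the most-specific-rule (FIND-S) operator: enlarging the set of positives it is built upon can only make the resulting rule more general. Let $t' \leq t$ be the iteration at which $\xx$ was assigned to $B_l$. Since $i < l$ and $\xx$ bypassed $B_i$ at time $t'$, generalizing $\rr_i$ to cover $\xx$ must have been blocked: the most specific rule over $B_i^{t'-1} \cup \{\xx\}$ covers some negative $\nn \in \cN$. Because $B_i^{t'-1} \subseteq B_i^t$, the most specific rule over $B_i^t \cup \{\xx\}$ is at least as general, hence also covers $\nn$. Finally, if $\rr_i^{t+1}$ (the most specific rule over $B_i^t \cup \{\pp\}$) already covered $\xx$, then adjoining $\xx$ to its argument set would leave $\rr_i^{t+1}$ unchanged, so $\rr_i^{t+1}$ would itself generalize the rule covering $B_i^t \cup \{\xx\}$ and therefore cover $\nn$---contradicting the fact that the update of $\rr_i$ was permitted precisely because consistency with all negatives was preserved.
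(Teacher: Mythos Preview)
Your proof is correct and follows the same inductive structure and case split as the paper's own proof. The only difference is that you spell out the monotonicity-of-FIND-S argument for why the generalized $\rr_i^{t+1}$ cannot cover any $\xx \in B_l$ with $l>i$, whereas the paper compresses this into the single remark that otherwise $\rr_i$ ``would have been previously generalized to cover $\xx$''; your version is the more rigorous rendering of the same idea.
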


\begin{proof}
We give a proof by induction on the number iterations $t$.
\textit{Case base}: $t=0$, then, by design, $D^0 = \emptyset$, thus Prop. \ref{prop1} is hollowly true;
\textit{Inductive step}: let us assume that Prop. \ref{prop1} holds up until iteration $t$ in which $D^t =\langle \rr_1, \dots, \rr_k \rangle$ where each conjunction $h_i$ has been learnt using FIND-S over $B_i$. At iteration $t+1$, a new positive instance $\xx$ is considered, and only one of the following two scenario must be true. $(1)$ $D^t \nsucceq \xx$, i.e., $\nexists \rr_i \mid$ generalize$(\rr_i, \xx)$ returns a hypothesis consistent with $\cN$, thus $D^{t+1} = D^t \vee \rr_{k+1}$ where $\rr_{k+1}=\xx$. Hence, by construction, $\nexists i \leq k \mid \rr_i \succeq \xx$, so Prop.\ref{prop1} holds.
$(2)$ $D^t \succeq \xx$, i.e., $\exists h_i \mid \rr'= $ generalize$(\rr_i, \xx) \succeq \xx$ and $\rr'$ is consistent with $\cN$. Since \method\ tries to generalize each conjunctive term in order, then $\forall j<i$, $\rr_j$ can not be safely generalized. After the generalization step, $\rr'$ can be either equals to $\rr_i$ (i.e., $\rr_i \succeq \xx$) or it can be a generalization of $\rr_i$, i.e., $\rr' \geq \rr_i$. In the former case, the overall hypothesis does not change and, for the just made considerations, Prop. \ref{prop1} holds. In the latter case, we have to show that $\rr'$ does not cover any instance in $B_j$ for $j>i$. Since $\rr_i$ has been created before any $\rr_{j>i}$ then $\rr_i$ can not be generalized (e.g., $\rr'$) such to cover any $\xx \in B_{j>i}$ otherwise it would have been previously generalized to cover $\xx$. Thus, Prop. \ref{prop1} holds.
\end{proof}

Proposition \ref{prop1} provides a "backward incompatibility" between rules, however it does not say anything in the other direction. In particular, as just mentioned, it may happen that every instance in $B_i$, for some $i$, can be covered by other conjunctive rules $\rr_j$ for $j>i$. 
In practice, this post-processing step (function \texttt{prune} in Algorithm~\ref{alg:finddnf}) checks if some $B_i$ can be emptied. In such a case, it means that the corresponding conjunctive term can be safely removed from the current hypothesis, thus creating a more specialized one that is still consistent with the training set.

The pseudo-code of \method\ is provided in Algorithm~\ref{alg:finddnf}.

\begin{algorithm}
    \caption{\method}
    \label{alg:finddnf}
	\DontPrintSemicolon
	\KwIn{
		$\cP$: set of positive examples;\\
		\hspace{3.35em}$\cN$: set of negative examples\\
	}
	\KwOut{
		$D$: Disjunctive Normal Form rule
	}
	\BlankLine
	$D, B, k \gets [], [], 0$ \; 
	\For {$\pp \in \cP$} {
	    done $\gets$ False \;
	    \Comment*[l]{\scriptsize{The conjunctive terms are taken in order}}
	    \For {$i \in [1, k]$} {
	        $\rr \gets D_i$ \;
	        $\rr' \gets$ generalize$(\rr, \pp)$ \;
	        \If {$\nexists \nn \in \cN \mid \rr' \succeq \nn$} {
	            $D_i \gets \rr'$ \;
	            $B_i \gets B_i \cup \{\pp\}$ \Comment*[r]{\scriptsize{Update the bucket}}
	            done $\gets$ True \;
	            break \;
	        }
	    }
	    \If {\normalfont{not done}} {
	        $k \gets k+1$ \;
	        $D_k \gets \pp$ \Comment*[r]{\scriptsize{Add a new conjunctive rule}}
	        $B_k \gets \{\pp\}$ \Comment*[r]{\scriptsize{Append a new bucket }}
	    }
    }
    $D \gets$ prune$(D)$ \;
    \Return $D$\;
\end{algorithm}

\subsection{Dealing with noisy data}

To cope with noisy datasets or harder concepts, we introduced in \method\ a tolerance hyper-parameter $\tau \in \mathbb{N}$. $\tau$ represents the number of negative instances of the training set that a single conjunctive rule is allowed to cover. With $\tau > 0$ we also allow \method\ to find a hypothesis when contradictory examples are present. However, since we are relaxing the consistency constraint, we are also increasing the likelihood of having false positives. As we will see in the experimental section, $\tau>0$ has been only used for two datasets while in all the other cases $\tau=0$ has been fixed.

\subsection{Bayes optimal approximation}
\label{find-dnf-bp}

\method\  allows us to find a DNF consistent with the training set assuming there are no instances duplicated in the positive and negative class. In other words, considering the hypothesis space of DNFs, the \method\ guarantees to find one hypothesis of the version space. This behavior resembles the Perceptron algorithm which is able to find a discriminant hyperplane whenever the data are linearly separable.
Here, we assume there exists a target MDNF that has generated data. In this case, the version space contains this hypothesis. The problem is that, when the version space is large, the chance for \method\ to find a good hypothesis decreases.  Moreover, when there are no background knowledge and the prior over the hypotheses is assumed to be uniform, it is well known that the optimal choice is to return the expected value of the decision of the hypotheses in the version space, a.k.a. Bayes Optimal Classifier (BOC):
$H_{bo}(\xx) = \text{sign}\left(\Exp_{h \in \text{V}(\cS)}[h(\xx)] \right)$.
A surrogate of the optimal classifier is the so-called Bayes Point Classifier (BPC). In this case, the center of mass $h_{cm} \in \text{V}(\cS)$ of the version space is selected and the classification is performed according to this hypothesis: $H_{bp}(\xx) = \text{sign} (h_{cm}(\xx))$.

Unfortunately, to get such hypotheses, one needs to sample uniformly from the version space and this is a very difficult task.
With the aim to approximate the BOC or the BPC, inspired by the work in \cite{DBLP:conf/nips/HerbrichG00a} applied to the perceptron algorithm, we propose to combine different DNFs obtained from multiple runs of \method\ where different orders of presentation of the examples are used.  

Specifically, \method-BO runs T instances of \method\, obtaining $T$ rule sets $\{h_1,\dots,h_T\}$. Then, an instance is classified according to the aggregated decision: $$H_{bo}(\xx) = \text{sign}\left(\sum_{t=1}^T h_t(\xx)\right).$$

We show in the experimental section that the new hypothesis significantly improves the accuracy upon individual rulesets and is competitive with state-of-the-art learning algorithms on most datasets.
Unfortunately, this aggregated hypothesis is not interpretable. For this, in the following, we propose an alternative that preserves interpretability while maintaining the accuracy of the aggregated rulesets given by \method-BO.

A ruleset decision can be represented in a vector space as follows. Consider an $R$-dimensional vector space where $R$ is the number of possible rules. Then, a ruleset can be seen as a vector $\hat{\ww} \in \Real^{R+1}$ where $\hat{w}_r = 1, r \leq R$ iff the rule indexed by $r$ is present in the ruleset, and $\hat{w}_{R+1} = -\frac{1}{2}$. An instance is represented in the same space as the vector $\hat{\xx} \in \Real^{R+1}$ where $\hat{x}_r = \llbracket \rr \succeq \xx \rrbracket, r \leq R$, and $\hat{x}_{R+1} = 1$. It can be easily verified that the ruleset decision can be given as $h(\xx) = \text{sign}(\langle\hat{ \ww} ,\hat{\xx} \rangle)$.

\method-BP is obtained by taking the approximate Bayes point obtained by averaging the hypotheses $\hat{\ww}^{(t)}$ found by running \method\ for $T$ times. Summarizing, we have 
\begin{align*}
    H_{bp}(\xx) &= \text{sign}\left(\frac{1}{T}\sum_{t=1}^T (\langle \hat{\ww}^{(t)}_{1:R}, \hat{\xx}_{1:R} \rangle - \frac{1}{2})\right) > 0\\
    &\Leftrightarrow \sum_{t=1}^T \langle \hat{\ww}^{(t)}_{1:R}, \hat{\xx}_{1:R} \rangle > \sum_{t=1}^T \frac{1}{2} = \frac{T}{2}.
\end{align*}

Noticing that $\sum_t \langle \hat{\ww}^{(t)}_{1:R}, \hat{\xx}_{1:R} \rangle$ equals the number of rules in the T rulesets $\xx$ satisfies in total, then $H_{bp}$ classifies a new instance as positive whenever the number of rules the instance satisfies in total exceeds $T/2$. Moreover, let $G$ be the set of unique discovered rules, then the decision can be compacted as $\sum_{\rr \in G} \alpha_{\rr} \llbracket \rr \succeq \xx \rrbracket > T/2$ where $\alpha_{\rr}$ is the number of times $\rr$ has been discovered.

Interestingly, this approach gives a principled method to order the discovered rules according to their weights $\alpha$. This weight represents the importance of a given rule in the decision. Hence, it makes sense to prune the set of rules by dropping the ones that have lower weights. In the experimental section we will see that doing pruning is effective both for interpretability purposes (it reduces the number of rules of explanation) and for reducing the overfit that can result when too many rules are used. Note that, when this pruning is performed, let say by retaining the first $K$ rules $G_K$, then the discriminant function needs to be changed accordingly. Namely, $H_K(\xx) = +1$ iif 
$$\sum_{\rr \in G_K} \alpha_\rr \llbracket \rr \succeq \xx \rrbracket >  \frac{\gamma_K T}{2}, \mbox{ where } \gamma_K = \frac{\sum_{\rr \in G_K} \alpha_\rr}{\sum_{\rr \in G} \alpha_\rr}.$$ 



\section{Experiments}

This section presents the experiments performed with \method. 
%
We evaluated our model on different benchmark datasets and compared it with many state-of-the-art baselines. The experiments have been executed on an Apple MacBook Pro (2019) with a 2,6 GHz 6-Core Intel Core i7. The model has been implemented in Python using  \texttt{scikit-learn}\footnote{\url{https://scikit-learn.org/stable/}}. The source code of our experiments is available at this link\footnote{\url{https://tinyurl.com/bdcwh35d}}.

To evaluate the performance of the methods, we used both \texttt{accuracy} and \texttt{F1-score}. 


\subsection{Datasets} 

We selected 11 real-world datasets, taken from the UCI Machine Learning Repository\footnote{\url{https://archive.ics.uci.edu/ml/index.php}}. The datasets are summarized in Table~\ref{tab:datasets}.

\begin{table}[h]
\small
\begin{tabular}{ l c c c c }
 \hline
 \textbf{Dataset} & \textbf{\#instances} & \textbf{\#classes} & \textbf{\#feat.} & \textbf{$\oplus$ class}  \\
 \hline
 \texttt{banknote} & 1372 & 2 & 14 & 1 \\ 
 \texttt{car} & 1728 & 6 & 4 & \textit{unacc} \\ 
 \texttt{connect-4} & 67557 & 3 & 42 & \textit{win} \\ 
 \texttt{kr-vs-kp} & 3196 & 2 & 36 & \textit{won} \\ 
 \texttt{monk-1} & 432 & 2 & 8 & 1 \\ 
 \texttt{monk-2} & 432 & 2 & 8 & 1 \\ 
 \texttt{monk-3} & 432 & 2 & 8 & 1 \\ 
 \texttt{mushroom} & 8124 & 2 & 22 & \textit{e (edible)} \\ 
 \texttt{tic-tac-toe} & 958 & 2 & 9 & \textit{positive} \\ 
 \texttt{vote} & 435 & 2 & 16 & \textit{republican} \\ 
 \texttt{wine} & 178 & 3 & 13 & 2 \\ 
 \hline
\end{tabular}
\caption{\label{tab:datasets}Real-world benchmark datasets for the rule learning task. All the datasets are discrete, except for \texttt{banknote} and \texttt{wine}.}
\end{table}

All the datasets are discrete, except for \texttt{banknote} and \texttt{wine}. Since we work with binary classification, multi-class datasets have been converted into binary classification datasets by selecting the most frequent class as the positive class. We used this procedure, replicating DRNC \cite{beck2021empirical} for all the datasets expect for \texttt{monk-1}, \texttt{monk-2}, \texttt{monk-3}, and \texttt{banknote}, where we used the value \texttt{1} as the positive class. For continuous data, we discretized the features using 10 bins, applying the \texttt{KBinsDiscretizer} provided by \texttt{scikit-learn}.

\subsection{Baselines} 

We compare \method\ with different baselines. They can be subdivided into interpretable and black-box approaches. In general, black-box models are more accurate but lack explainability, while interpretable models can generate logical rules that can be used both for prediction and explanation. 

\noindent Interpretable models: Decision Trees (\texttt{CART}), 
Logistic Regression (\texttt{LR}),
\texttt{RIPPER}\footnote{ \url{https://github.com/imoscovitz/wittgenstein}}~\cite{ripper}, 
and
Scalable Bayesian Rule List (\texttt{SBRL}\footnote{\url{https://github.com/oracle/Skater}}) \cite{sbrl}.


\noindent Black-box models: Support Vector Machines (\texttt{SVM}), and Random Forests (\texttt{RF}).



To select the hyper-parameters for the baselines, we followed the choices presented in\cite{rrl}, summarized in Table~\ref{tab:hyper-params}.

\begin{table}
\small
\centering
\begin{tabular}{ l l }
 \hline
 \textbf{Baseline} & \textbf{Hyper-parameters}  \\
 \hline
 \texttt{CART} & max\_depth $\in \{\text{None}, 5, 10, 20\}$ \\ 
 \texttt{LR} & penalty $\in \{l_1, l_2\}, C \in \{1, 4, 16, 32\},$ \\
 & solver=liblinear\\
 \texttt{RF} & n\_estimators $\in \{10, 100, 500\}$ \\ 
 \texttt{SVM} & kernel $\in \{\text{linear}, \text{rbf}, \text{poly}\},$\\
 & $C \in \{1, 4, 16, 32\}$ \\ 
 \texttt{RIPPER} & $k \in \{1, 2\}$, prune\_size $\in \{0.5, 0.33, 0.2\},$ \\
 & dl\_allowance $\in \{64, 32\}$ \\ 
 \texttt{SBRL} & $\lambda = 5, \eta = 1$, iters$=5000$, n\_chains$=20$, \\ 
 & max\_rule\_size$=3$, min\_rule\_size$=1$ \\ 
 \hline
\end{tabular}
\caption{\label{tab:hyper-params}Hyper-parameters considered in our experiments. 
For some hyper-parameters, a range of values is presented. These values are used for performing a grid search, as explained in Section~\ref{sec:exp-setting}.}
\end{table}

\subsection{Experimental setting}\label{sec:exp-setting}

To train, validate, and test our baselines, we used the following procedure:
$(1)$ the dataset is randomly split into training and test set, using a 50/50 proportion;
$(2)$ a grid search is performed on the training set using 5-fold cross-validation, with hyper-parameters reported in Table~\ref{tab:hyper-params};
$(3)$ the best model found is trained on the entire training set;
$(4)$ the metrics are computed on the test set.
The same procedure has been applied to our approach, with the difference that the second and third steps were not performed since \method\ have no hyper-parameters.
The number of iterations $T$ of \method-BO/BP has been set to 100 (\method-BO/BP$_{100}$). For \texttt{monk-2}, \texttt{monk-3} and \texttt{vote}, we set $\tau=1$, in all other cases $\tau=0$.
We repeated this process for ten runs and the test metrics have been averaged among these runs. For \texttt{connect-4}, we performed this procedure once since the dataset is computationally demanding. 
Additionally, we tried AV and OH encoding representations of the features for \texttt{CART}, \texttt{RIPPER}, and \method\, and reported the results obtained with the best-performing one. For the other baselines, we used the OH encoding representation only as they cannot use AV encoding.

\subsection{Results}\label{sec:results}

A comparison between interpretable models is presented in Table~\ref{tab:f1_int}, while a comparison between black-box models is presented in Table~\ref{tab:f1_unint}. 

\begin{table*}[ht]
\centering
\small
\begin{tabular}{ l c c c c c c  }
\hline
& \method & \method-BP$_{100}$ & RIPPER & CART & LR & SBRL\\ 
 \hline
\texttt{banknote} & 0.949 ± 0.00 & 0.954 ± 0.00 & 0.696 ± 0.02 & 0.967 ± 0.01 & \textbf{0.996} ± 0.00 & N/A \\ 
\texttt{car} & \textbf{0.990} ± 0.00 & 0.989 ± 0.00 & 0.988 ± 0.01 & 0.983 ± 0.00 & 0.963 ± 0.00 & 0.966 ± 0.00 \\ 
\texttt{kr-vs-kp} & 0.987 ± 0.00 & \textbf{0.993} ± 0.00 & 0.981 ± 0.00 & 0.989 ± 0.01 & 0.972 ± 0.00 & N/A \\ 
\texttt{monk-1} & \textbf{1.000} ± 0.00 & \textbf{1.000} ± 0.00 & 0.939 ± 0.06 & 0.909 ± 0.05 & 0.669 ± 0.02 & \textbf{1.000} ± 0.00 \\ 
\texttt{monk-2} & 0.768 ± 0.00 & \textbf{0.811} ± 0.02 & 0.175 ± 0.12 & 0.811 ± 0.08 & 0.038 ± 0.05 & 0.297 ± 0.27 \\ 
\texttt{monk-3} & 0.970 ± 0.00 & \textbf{0.988} ± 0.01 & 0.935 ± 0.01 & 0.986 ± 0.01 & 0.972 ± 0.01 & 0.988 ± 0.00 \\ 
\texttt{mushrooms} & \textbf{1.000} ± 0.00 & \textbf{1.000} ± 0.00 & \textbf{1.000} ± 0.00 & 1.000 ± 0.00 & \textbf{1.000} ± 0.00 & N/A \\ 
\texttt{ttt} & \textbf{1.000} ± 0.00 & \textbf{1.000} ± 0.00 & 0.986 ± 0.02 & 0.932 ± 0.02 & 0.986 ± 0.00 & 0.961 ± 0.03 \\ 
\texttt{vote} & 0.910 ± 0.00 & 0.933 ± 0.01 & 0.815 ± 0.03 & 0.933 ± 0.02 & \textbf{0.942} ± 0.01 & N/A \\ 
\texttt{wine} & 0.811 ± 0.00 & 0.878 ± 0.03 & 0.826 ± 0.09 & 0.855 ± 0.04 & \textbf{0.919} ± 0.04 & 0.719 ± 0.11 \\ 
\texttt{connect-4}$^*$ & 0.850 & \textbf{0.896} & 0.730 & 0.849 & 0.849 & N/A \\ 
\hline
\textbf{AvgRank}  & 3.00 & \textbf{1.82} & 4.32 & 3.45 & 3.50 & 4.91\\
\hline
\end{tabular}
\caption{\label{tab:f1_int} \texttt{F1-score} for interpretable models. The metric is computed on the test set and averaged across ten runs of the evaluation procedure explained in Section~\ref{sec:exp-setting}. (*) For \texttt{connect-4} the hyper-parameter $T$ of \method-BP is set to 20, and the standard deviation is not reported since the evaluation procedure has been computed once. Best results are shown in bold.}  
\end{table*}

\begin{table*}[th]
\small
\centering
    \begin{tabular}{ l c c c c c c  }
  & \method & \method-BP$_{100}$ & RIPPER & CART & LR & SBRL\\ 
 \hline
\texttt{banknote} & 0.956 ± 0.00 & 0.960 ± 0.00 & 0.786 ± 0.02 & 0.971 ± 0.01 & \textbf{0.997} ± 0.00 & N/A \\ 
\texttt{car} & \textbf{0.986} ± 0.00 & 0.984 ± 0.00 & 0.983 ± 0.01 & 0.976 ± 0.01 & 0.948 ± 0.01 & 0.953 ± 0.01 \\ 
\texttt{kr-vs-kp} & 0.986 ± 0.00 & \textbf{0.992} ± 0.00 & 0.980 ± 0.00 & 0.989 ± 0.01 & 0.971 ± 0.00 & N/A \\ 
\texttt{monks-1} & \textbf{1.000} ± 0.00 & \textbf{1.000} ± 0.00 & 0.943 ± 0.04 & 0.908 ± 0.05 & 0.746 ± 0.01 & \textbf{1.000} ± 0.00 \\ 
\texttt{monks-2} & 0.836 ± 0.00 & \textbf{0.868} ± 0.01 & 0.674 ± 0.03 & 0.868 ± 0.06 & 0.612 ± 0.03 & 0.661 ± 0.03 \\ 
\texttt{monks-3} & 0.968 ± 0.00 & \textbf{0.988} ± 0.01 & 0.935 ± 0.01 & 0.986 ± 0.01 & 0.971 ± 0.01 & 0.987 ± 0.00 \\ 
\texttt{mushrooms} & \textbf{1.000} ± 0.00 & \textbf{1.000} ± 0.00 & \textbf{1.000} ± 0.00 & 1.000 ± 0.00 & \textbf{1.000} ± 0.00 & N/A \\ 
\texttt{ttt} & \textbf{1.000} ± 0.00 & \textbf{1.000} ± 0.00 & 0.982 ± 0.03 & 0.912 ± 0.02 & 0.982 ± 0.00 & 0.949 ± 0.04 \\ 
\texttt{vote} & 0.932 ± 0.00 & 0.950 ± 0.00 & 0.868 ± 0.02 & 0.947 ± 0.02 & \textbf{0.955} ± 0.01 & N/A \\ 
\texttt{wine} & 0.864 ± 0.00 & 0.909 ± 0.02 & 0.876 ± 0.05 & 0.891 ± 0.03 & \textbf{0.939} ± 0.03 & 0.825 ± 0.06 \\ 
\hline 
\texttt{connect-4} & 0.803 & \textbf{0.860} & 0.712 & 0.801 & 0.793 & N/A \\ 
\hline
\textbf{AvgRank}  & 3.00 & \textbf{1.73} & 4.18 & 3.45 & 3.64 & 5.00
    \end{tabular}
    \caption{\label{tab:acc_int} \texttt{Accuracy} for interpretable models. The metric is computed on the test set and averaged across ten runs of the evaluation procedure explained in Section~\ref{sec:exp-setting}. (*) For \texttt{connect-4} the hyper-parameter $T$ of \method-BP is set to 20, and the standard deviation is not reported since the evaluation procedure has been computed once. Best results are shown in bold.}
    \end{table*}

Both tables present the results using the \texttt{F1-score}.
In the last line of each table we show the average rank of the compared methods. The best results are reported in bold. The results of \texttt{SBRL} are incomplete since the execution of that baseline has been interrupted due to memory errors.

By looking at Table~\ref{tab:f1_int}, we can observe that \method\ outperforms the majority of selected interpretable baselines. Interestingly, \texttt{LR}  outperforms \method\ on three datasets. For continuous datasets, namely \texttt{wine} and \texttt{banknote}, this behavior could be explained by the fact that the datasets have been discretized using ten bins, which could be suboptimal for \method.

Using \method-BP$_{100}$, as expected, the results can be further improved. In addition, it is essential to highlight that \method\ outperforms the selected baselines even on the most challenging dataset, namely \texttt{connect-4}. Overall, \method-BP$_{100}$ is the best performing approach, with an average rank of 1.82.

Table~\ref{tab:f1_unint} reports the results of black-box methods. \method-BO$_{100}$ outperforms the selected baselines on five datasets. In three of these, it has the same performance of \texttt{SVM}. Again, \texttt{wine} has shown to be a challenging dataset, probably due to its continuous nature. Both \method-BO$_{100}$ and \texttt{SVM} perform poorly compared to Random Forests. Again, this could be due to the number of bins chosen for the discretization of the continuous variables. Overall, black-box \texttt{SVM} is the best performing approach. 

    

The results for the same experiments using the \texttt{accuracy} are presented in Table~\ref{tab:acc_int} and Table~\ref{tab:acc_unint}.

\begin{table}[h]
\small
\begin{tabular}{ l c c c  }
\hline
& \method-BO$_{100}$ & SVM & RF\\ 
 \hline
\texttt{banknote} & 0.953 ± 0.00 & \textbf{0.997} ± 0.00 & 0.981 ± 0.00 \\ 
\texttt{car} & 0.988 ± 0.00 & \textbf{0.996} ± 0.00 & 0.983 ± 0.01 \\ 
\texttt{kr-vs-kp} & \textbf{0.993} ± 0.00 & 0.991 ± 0.00 & 0.988 ± 0.00 \\ 
\texttt{monk-1} & \textbf{1.000} ± 0.00 & \textbf{1.000} ± 0.00 & 0.989 ± 0.01 \\ 
\texttt{monk-2} & 0.829 ± 0.02 & \textbf{0.966} ± 0.02 & 0.537 ± 0.07 \\ 
\texttt{monk-3} & \textbf{0.988} ± 0.01 & 0.985 ± 0.01 & 0.971 ± 0.01 \\ 
\texttt{mushrooms} & \textbf{1.000} ± 0.00 & \textbf{1.000} ± 0.00 & \textbf{1.000} ± 0.00 \\ 
\texttt{ttt} & \textbf{1.000} ± 0.00 & 0.986 ± 0.00 & 0.967 ± 0.01 \\ 
\texttt{vote} & 0.933 ± 0.01 & \textbf{0.939} ± 0.02 & 0.935 ± 0.03 \\ 
\texttt{wine} & 0.878 ± 0.03 & 0.925 ± 0.02 & \textbf{0.941} ± 0.05 \\ 
\texttt{connect-4}$^*$ & 0.894 & \textbf{0.917} & 0.901 \\ 
\hline
\textbf{AvgRank}  & 2.05 & \textbf{1.50} & 2.45\\
\hline
\end{tabular}
\caption{\label{tab:f1_unint} \texttt{F1-score} for black-box models. The metric is computed on the test set and averaged across ten runs of the evaluation procedure explained in Section~\ref{sec:exp-setting}. (*) For \texttt{connect-4} the hyper-parameter $T$ of \method-BO is set to 20, and the standard deviation is not reported since the evaluation procedure has been computed once. Best results are shown in bold. }
\end{table}

\begin{table}[h]
\small
    \begin{tabular}{ l c c c  }
  & \method-BO$_{100}$ & SVM & RF\\ 
 \hline
\texttt{banknote} & 0.960 ± 0.00 & \textbf{0.997} ± 0.00 & 0.983 ± 0.00 \\ 
\texttt{car} & 0.984 ± 0.00 & \textbf{0.995} ± 0.00 & 0.976 ± 0.01 \\ 
\texttt{kr-vs-kp} & \textbf{0.992} ± 0.00 & 0.990 ± 0.00 & 0.987 ± 0.00 \\ 
\texttt{monks-1} & \textbf{1.000} ± 0.00 & \textbf{1.000} ± 0.00 & 0.989 ± 0.01 \\ 
\texttt{monks-2} & 0.884 ± 0.01 & \textbf{0.977} ± 0.01 & 0.740 ± 0.03 \\ 
\texttt{monks-3} & \textbf{0.987} ± 0.01 & 0.984 ± 0.01 & 0.970 ± 0.01 \\ 
\texttt{mushrooms} & \textbf{1.000} ± 0.00 & \textbf{1.000} ± 0.00 & \textbf{1.000} ± 0.00 \\ 
\texttt{ttt} & \textbf{1.000} ± 0.00 & 0.981 ± 0.00 & 0.956 ± 0.01 \\ 
\texttt{vote} & 0.947 ± 0.00 & \textbf{0.953} ± 0.01 & 0.950 ± 0.02 \\ 
\texttt{wine} & 0.904 ± 0.02 & 0.944 ± 0.02 & \textbf{0.954} ± 0.04 \\ 
\hline 
\texttt{connect-4} & 0.860 & \textbf{0.890} & 0.865 \\ 
\hline
\textbf{AvgRank}  & 2.05 & \textbf{1.50} & 2.45
    \end{tabular}
    \caption{\label{tab:acc_unint} \texttt{Accuracy} for black-box models. The metric is computed on the test set and averaged across ten runs of the evaluation procedure explained in Section~\ref{sec:exp-setting}. (*) For \texttt{connect-4} the hyper-parameter $T$ of \method-BO is set to 20, and the standard deviation is not reported since the evaluation procedure has been computed once. Best results are shown in bold. }
    \end{table}


\subsection{The effect of pruning in \method}
Here we show that pruning (Sec.~\ref{sec:pruning}) may help both the interpretability as well as the accuracy of the model. We consider the \method's hypothesis before and after pruning on \texttt{tic-tac-toe} using 50\% of the dataset for training with AV encoding. 
Before pruning, the hypothesis consists of 9 conjunctive rules: 8 describe the winning three in a row for $\times$, and one rule describes a naught, a cross, and a blank in the middle row. 
Clearly, this last rule is incorrect allowing to get false positives (confirmed by the 99.5\% of accuracy on the test set). The pruning phase correctly removes such a rule, leaving the hypothesis with only the correct 8 winning combinations. It could also be observed that, in the bucket of examples associated with the wrong rule, that was the sixth to be created, there were only 6 examples where 5 of them were covered by the antidiagonal rule and one by the bottom row rule, both of them created afterwards.

\subsection{Pruning the \method-BP rules}
In Section \ref{find-dnf-bp} we have discussed how the rules discovered by \method-BP can be pruned. The plot in Figure \ref{fig:pruning} shows how the training and testing accuracies vary with respect to the degree of pruning performed in 4 different datasets. 

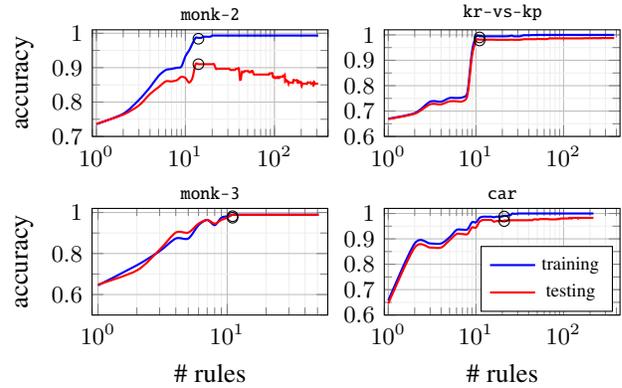
\begin{figure}[h]
\label{fig:pruning}
\begin{tikzpicture}
\begin{axis}[
    xmin = 0.9, xmax = 400,
    ymin = 0.7, ymax = 1.01,
    xmode=log,
    grid = both,
    minor tick num = 1,
    major grid style = {lightgray},
    minor grid style = {lightgray!25},
    width = 0.55\columnwidth,
    height = 0.35\columnwidth,
    ylabel = {accuracy},
    ylabel near ticks,
    every tick label/.append style={font=\small},
    legend style={font=\scriptsize},
    legend pos = south east,
    title={\scriptsize{\texttt{monk-2}}},
    title style={yshift=-6pt}]

\addplot[
    smooth,
    blue,
    thick
] table[x expr=\coordindex,y=y] {monks2_train.txt}; 

\addplot[
    smooth,
    red,
    thick
] table[x expr=\coordindex,y=y] {monks2_test.txt};

\addplot[solid, mark=o] coordinates { (14,0.984)  };
\addplot[solid, mark=o] coordinates { (14,0.91)  };

\end{axis}
\end{tikzpicture}
\begin{tikzpicture}
\begin{axis}[
    xmin = 0.9, xmax = 440,
    ymin = 0.6, ymax = 1.02,
    xmode=log,
    grid = both,
    minor tick num = 1,
    major grid style = {lightgray},
    minor grid style = {lightgray!25},
    width = 0.55\columnwidth,
    height = 0.35\columnwidth,
    every tick label/.append style={font=\small},
    legend style={font=\scriptsize},
    legend pos = south east,
    title={\scriptsize{\texttt{kr-vs-kp}}},
    title style={yshift=-6pt}]

\addplot[
    smooth,
    blue,
    thick
] table[x expr=\coordindex,y=y] {krvskp_train.txt}; 

\addplot[
    smooth,
    red,
    thick
] table[x expr=\coordindex,y=y] {krvskp_test.txt};

\addplot[solid, mark=o,] coordinates { (11,0.99)  };
\addplot[solid, mark=o,] coordinates { (11,0.978)  };

\end{axis}
\end{tikzpicture}

\begin{tikzpicture}
\begin{axis}[
    xmin = 0.9, xmax = 60,
    ymin = 0.5, ymax = 1.02,
    xmode=log,
    grid = both,
    minor tick num = 1,
    major grid style = {lightgray},
    minor grid style = {lightgray!25},
    width = 0.55\columnwidth,
    height = 0.35\columnwidth,
    xlabel = {\# rules},
    ylabel = {accuracy},
    ylabel near ticks,
    every tick label/.append style={font=\small},
    legend style={font=\scriptsize},
    legend pos = south east,
    title={\scriptsize{\texttt{monk-3}}},
    title style={yshift=-6pt}]

\addplot[
    smooth,
    blue,
    thick
] table[x expr=\coordindex,y=y] {monks3_train.txt}; 

\addplot[
    smooth,
    red,
    thick
] table[x expr=\coordindex,y=y] {monks3_test.txt};

\addplot[solid, mark=o] coordinates { (11,0.98188)  };
\addplot[solid, mark=o] coordinates { (11,0.9746)  };

\end{axis}
\end{tikzpicture}
\begin{tikzpicture}
\begin{axis}[
    xmin = 0.9, xmax = 440,
    ymin = 0.6, ymax = 1.02,
    xmode=log,
    grid = both,
    minor tick num = 1,
    major grid style = {lightgray},
    minor grid style = {lightgray!25},
    width = 0.55\columnwidth,
    height = 0.35\columnwidth,
    xlabel = {\# rules},
    every tick label/.append style={font=\small},
    legend style={font=\scriptsize},
    legend pos = south east,
    title={\scriptsize{\texttt{car}}},
    title style={yshift=-6pt}]

\addplot[
    smooth,
    blue,
    thick
] table[x expr=\coordindex,y=y] {car_train.txt}; 

\addplot[
    smooth,
    red,
    thick
] table[x expr=\coordindex,y=y] {car_test.txt};

\addplot[solid, mark=o] coordinates { (21,0.988)  };
\addplot[solid, mark=o] coordinates { (21,0.97)  };

\legend{training, testing}
\end{axis}
\end{tikzpicture}
\caption{The plots (log scale) show the behaviour of the training and test accuracies varying the number of kept rules in the \method-BP pruning phase. Rules are sorted in decreasing order of importance. It is possible to notice that there are cases of overfitting where removing the least important rules improve the test accuracy. In any case, more than 90\% are removed thus increasing the interpretability of the method with no loss in accuracy.
The black circle highlights the accuracies corresponding to the 99\% threshold.
\label{fig:pruning}}
\end{figure}

Interestingly, we can see that, when the training accuracy stabilizes, this also corresponds to the best value for the test accuracy. We can also notice that, in one case, \texttt{monk-2}, this point corresponds to a drop of the test accuracy (overfitting). This behavior suggests a strategy for the selection of the optimal number $K$ of rules to keep. Specifically, we can set a threshold and select the minimum number of rules such that the training accuracy is above that threshold. In the figure, it is shown that, setting a threshold of $0.99 \times \text{tr\_acc}$, where $\text{tr\_acc}$ is the training accuracy of the complete rule set without pruning,  can  drastically reduce the number of rules, thus improving the interpretability of the ruleset with no reduction in accuracy. It can also be beneficial with respect to the test accuracy when overfitting occurs (see for example Figure~\ref{fig:pruning}). 

\section{Conclusions}

This paper has proposed a novel methodology that tries to approximate the Bayes optimal classifier in the hypothesis space of MDNF or rule sets.
The method has demonstrated to discover rules that are very accurate and still interpretable.
We have also provided a methodology to reduce the number of rules of the final hypothesis, thus improving interpretability with no loss in classification performance. We are currently working on a principled extension of this methodology to multiclass classification. In the future, we aim to adjust the method to make it able to handle continuous variables.

\bibliographystyle{named}
\bibliography{ijcai22}


\end{document}